\def\Method{{\sc BTGM}}
\def\method{Boltzmann Tuning of Generative Models}
\title{Boltzmann Tuning of Generative Models}
\author{%
  Victor Berger \\
  TAU, LISN,\\
  \small{Inria, Univ. Paris-Saclay, CNRS,}\\
  \texttt{victor.berger@inria.fr} \\
  \and
  Michèle Sebag \\
  TAU, LISN,\\
  \small{CNRS, Univ. Paris-Saclay, Inria,}\\
  \texttt{sebag@lri.fr}
}
\def\objective{criterion}
\def\objectives{criteria}
\newtheorem{lemma}{Lemma}
\begin{document}

\maketitle

\begin{abstract}
The paper focuses on the {\em a posteriori} tuning of a generative model in order to favor the generation of good instances in the sense of some external differentiable \objective.  The proposed approach, called {\em \method} (\Method),  applies to a wide range of applications. It covers conditional generative modelling as a particular case, and offers an affordable alternative to rejection sampling. The contribution of the paper is twofold. Firstly, the objective is formalized and tackled as a well-posed optimization problem; a practical methodology is proposed to choose among the candidate \objectives\ representing the same goal, the one best suited to efficiently learn a tuned generative model. Secondly, the merits of the approach are demonstrated on a real-world application, in the context of robust design for energy policies, showing the ability of \Method\ to sample the extreme regions of the considered \objectives.

\end{abstract}

\section{Introduction}

Deep generative models, including Variational Auto-Encoders (VAEs) \citep{kingma_auto-encoding_2014,rezende_stochastic_2014}, Generative Adversarial Networks (GANs)  \citep{goodfellow_generative_2014}, and Normalizing Flows \citep{rezende_variational_2015},
have been used in a number of ways for (semi)-supervised learning and design. Their usage ranges from robustifying classifiers \citep{kingma_semi-supervised_2014,li_are_2019} to achieving anomaly detection \citep{pidhorskyi_generative_2018,choi_waic_2019}) or solving undetermined inverse problems \citep{ardizzone_analyzing_2019}, from super-resolution of images \citep{ledig_photo-realistic_2017} to computer-assisted creative design \citep{park_gaugan_2019}. In most cases, the fine-tuning of the generative model is seamlessly integrated within the learning process: through the design of the latent representation  \citep{radford_unsupervised_2016,mathieu_disentangling_2016} or through the loss itself, e.g. leveraging labelled information to train conditional generative models \citep{van_den_oord_conditional_2016} (more in section \ref{sec:sota}).

This paper tackles the {\em a posteriori} tuning of a trained generative model, 
aimed at favoring the generation of good samples in the sense of a given 
\objective. The applicative motivation for the proposed approach comes from the design of energy safety policies. In this context, an infrastructure must be tested against a host of diverse production and consumption scenarios, and specifically against their associated consumption peaks.\footnote{These consumption peaks are usually estimated by Monte-Carlo methods, coupling a generative model with rejection sampling, along a tedious and computationnally heavy process, involving the critical estimation of the diversity factor \citep{gonen_electrical_2015,sarfraz_update_2018}.}
  One applicative goal of the proposed approach, called {\em \method} (\Method), is to address this problem by generating consumption curves directly sampling the desired top quantiles of the aggregated consumption distribution. 

This paper considers the general setting defined by a trained generative model and some \objective\ $f$, with the goal of generating  samples biased toward maximizing $f$. This goal is formalized as a constrained optimization problem in the considered distribution space, and a first contribution is to show how to soundly and effectively tackle this problem within the variational inference framework, assuming the differentiability of the \objective\  (section \ref{sec:method}). The proposed \Method\ approach can be applied on the top of any deep generative model, covering conditional generative models \citep{van_den_oord_conditional_2016} as a particular case. It also opens some perspectives in privacy-sensitive domains, e.g. to generate samples in critical and data-poor regions (see also \cite{dash_medical_2020}). In practice, \Method\ offers an affordable and, to our best knowledge, new alternative to rejection sampling. 

Most generally, \Method\ is an attempt toward reconciling data-driven models (here, the generative model learned from extensive data) on the one hand, and analytical, interpretable knowledge (here, the characterization of $f$) on the other hand. While ML traditionally focuses on cases where knowledge/specification is better conveyed through data, some specifications are better conveyed analytically, particularly so when they are poorly illustrated in the data (see also \cite{bessiere_constraint_2017}).
The challenge is to take advantage of both raw data and analytical \objectives\ in an integrated way. 
Along this line, a second contribution of the paper regards how to formulate the user's \objective\ $f$ in the most effective way. Indeed the objective can be formulated in many different ways, up to monotonous transformations of $f$. In order to avoid determining the best formulation of the \objective\ along a tedious trial-and-error phase, an indicator based on the analysis of the underlying optimization process is defined, enabling the comparison of candidate \objectives\ w.r.t. the tuning of the generative model at hand. 

Section \ref{sec:expes} presents several case studies to illustrate the merits and flexibility of the approach: recovering conditional generative modeling (\ref{sec:app-cls}), comparing candidate \objectives\ (\ref{sec:method-compare-f}), showing the flexibility of the approach in the energy consumption modeling domain (\ref{sec:app-elec}) and investigating the {\em a posteriori} deblurring of a generative model (\ref{sec:fine-tune-gm}).



\section{Related work}
\label{sec:sota}
Probability distribution learning is most generally tackled within the Variational Inference (VI) framework. VI being also at the core of the proposed approach, it is presented in section \ref{sec:VI} in a unified way, to both learn a probability distribution from raw data, and tune an existing probability distribution along an analytical \objective. 

The current trends in generative modelling mostly leverage the deep learning efficiency and flexibility to estimate a probability distribution from data, supporting an efficient sampling mechanism \citep{kingma_auto-encoding_2014,rezende_stochastic_2014,goodfellow_generative_2014,rezende_variational_2015}.
Most approaches rely on the introduction of a latent space, whose samples are decoded into a data space.
The generative model is trained to optimize a goodness-of-fit criterion on the original data. In VAEs   \citep{kingma_auto-encoding_2014}, the goodness of fit is the log-likelihood (LL) of the initial data, estimated using the Evidence Lower Bound (ELBO) \citep{bishop_approximating_1998}, as the distribution involves an unknown/unmanageable normalization constant. In GANs \citep{goodfellow_generative_2014}, the goodness of fit criterion is replaced by a 2-sample test, adversarially training the generator and a discriminator estimating whether the generated examples can be discriminated from the original samples. 

\noindent{\bf Distribution spaces.} 
How to make the generative model space flexible enough to accurately approximate the true distribution is mostly handled through using richer latent spaces and/or inference models \citep{burda_importance_2016,van_den_oord_neural_2017,roy_theory_2018,razavi_generating_2019,huang_hierarchical_2019,mathieu_continuous_2019,kalatzis_variational_2020,skopek_mixed-curvature_2020}.
The modelling of multi-mode distributions can also be tackled using continuous and discrete latent variables \citep{jang_categorical_2017,vahdat_dvae_2018}. Specific architectures are designed to exploit the specifics of the data structure, such as Wavenet or Magenta for signal processing \citep{oord_wavenet_2016,roberts_hierarchical_2018} or PixelRNN/CNN for images \citep{oord_pixel_2016,salimans_pixelcnn_2017}, enabling the data likelihood to be explicitly computed and optimized.
Normalizing Flows \citep{rezende_variational_2015,dinh_nice_2015} also proceed by gradually complexifying a distribution, with the particularity that each layer is invertible and enables its Jacobian to be analytically determined, thereby supporting the approximation of the posterior distributions \citep{dinh_density_2017,kingma_improving_2017,ardizzone_analyzing_2019,chen_vflow_2020}.

\noindent{\bf Loss functions.}
The loss function encapsulates the goodness of fit criterion. Many VAE variants focus on the reformulation of the loss to finely control the trade-off between the reconstruction quality and encoding compression \citep{higgins_beta-vae_2017,rezende_taming_2018,alemi_fixing_2018,mathieu_disentangling_2019}. The loss design also aims to avoid pitfalls, notably in terms of instability or mode dropping \citep{arjovsky_wasserstein_2017} with GANs; other distances between the generated and the original distributions 
\citep{nowozin_f-gan_2016,arjovsky_wasserstein_2017} and/or more elaborate 
model architectures \citep{sajjadi_tempered_2018,shaham_singan_2019,torkzadehmahani_dp-cgan_2019} have thus been investigated.

\noindent{\bf Refining Generative Models.} Most generally, the refinement of generative models is based on exploiting supervised information to build conditional models \citep{mirza_conditional_2014,sohn_learning_2015,van_den_oord_conditional_2016,jaiswal_bidirectional_2019}. Another strategy is to use several data samples, within a domain adaptation or multi-task setting \citep{ganin_domain-adversarial_2016}, and to learn coupled generative models \citep{chu_cyclegan_2017}.
Most generally, the customization and refinement of generative models builds upon one or several datasets, exploiting prior knowledge about their features (labels), or about the relationships between the datasets \citep{courty_joint_2017}. 

The alternative explored by \Method\ is to use high-level, analytical information, expressed via \objectives, 
to refine a generative model. On the positive side, this approach is flexible and does not depend on the regions of interest of the instance space to be "sufficiently" represented in the dataset(s). On the negative side, the approach might be {\em too} flexible, in the sense that the regions of interest might be specified in a number of ways, although not all specifications are equally easy to deal with.  We shall return to this point in section \ref{sec:method-compare-f}.

\section{\method}
\def\RR{\mbox{$\mathbb{R}$}}
\label{sec:method}

Let $p$ and $f$ respectively denote the initial generative model defined on the sample space $\mathcal{X} \subset \RR^d$, and the \objective\ of interest ($f: \mathcal{X} \mapsto \RR$). It is assumed wlog that the generative model should be biased toward regions where $f$ takes high values. 
%
The sought biased generative model $q$ is expressed as the solution of a constrained optimization problem: maximizing the expectation of $f$ under $q$, subject to $q$ remaining "sufficiently" close to $p$ in the sense of their Kullback-Leibler divergence:
\begin{equation}
  \mbox{Find~} q = \arg\max \mathbb{E}_q[f] ~s.t.~ D_{KL}(q \|| p) \le C_D
  \label{eq:opt-const}
\end{equation}  
with $C_D$ a positive constant. The Lagrangian $\mathcal{L}$ associated to this primal constrained optimization problem is, with $\lambda$ the Lagrange multiplier accounting for the constraint: 
\begin{equation}
    \mathcal{L}(q) = \int_\mathcal{X} q(x)f(x)dx + \lambda \int_\mathcal{X} q(x) \log\frac{q(x)}{p(x)}dx
\end{equation}
reaching its optimum for:
\begin{equation}
    q_\beta(x) = \frac{1}{Z(\beta)}p(x)e^{\beta f(x)} \label{eq:q_beta}
\end{equation}
with $\beta = 1/\lambda$ and $Z(\beta)$ the normalization constant. \Method\ tackles the dual optimization problem of minimizing $D_{KL}(q||p)$ subject to $\mathbb{E}_q f$ being greater than some constant $C_f$, yielding solution $q_\beta$ for some $\beta$ depending on $C_f$ (below): 
\begin{equation}
    q_\beta = \mathop{\arg\max}_{q} \; \beta \, \mathbb{E}_q f - D_{KL}(q\|p) \label{eq:opt_q}
\end{equation}

\subsection{Finding \texorpdfstring{$\beta$}{beta}}\label{sec:Pareto}
Varying the strength of the bias, from no bias ($\beta = 0$ yields $q_\beta = p$) to $\beta = \infty$ (with   $q_\beta$ with support in the optima of $f$) yields a family of distributions, the Pareto front associated to the maximization of $\mathbb{E}_q f$ and minimization of $D_{KL}(q\|p)$. Simple calculations yield (Appendix \ref{app:derivatives}): 
\begin{equation}
    \frac{d}{d \beta} D_{KL}(q_\beta \| p) = \beta Var_{q_\beta}(f) \quad \text{and} \quad \frac{d}{d \beta} \mathbb{E}_{q_\beta} f = Var_{q_\beta}(f) \label{eq:d_beta}
\end{equation}
$D_{KL}(q_\beta\|p)$ and $\mathbb{E}_{q_\beta} f$ being strictly increasing functions of $\beta$, there exists a one-to-one mapping between the values of $D_{KL}(q_\beta\|p)$, and $\mathbb{E}_{q_\beta}f$, hence 
there exists a single $\beta$ value such that $q_\beta$ solves the constrained optimization problem.  
Further calculations yield the second order derivatives: 
\begin{equation}
\label{eq:d_beta2}
    \frac{d^2}{d \beta^2} D_{KL}(q_\beta \| p) = Var_{q_\beta}(f) + \beta \mathbb{E}_{q_\beta} \left( f - \mathbb{E}_{q_\beta} f\right)^3 \quad
    \frac{d^2}{d \beta^2} \mathbb{E}_{q_\beta} f = \mathbb{E}_{q_\beta} \left( f - \mathbb{E}_{q_\beta} f\right)^3
\end{equation}
Note that any generative model $q_\beta$ enables by construction to empirically estimate the first three moments of $f$ under $q_\beta$, as well as  $D_{KL}(q_\beta \|| p)$. Plugging these estimates in Eqs. \ref{eq:d_beta} and \ref{eq:d_beta2} and using second order optimization methods \citep{boyd_convex_2004} enables to quickly converge toward the desired value of $\beta$, i.e. such that 
$\mathbb{E}_{q_\beta}f = C_f$  (Alg. \ref{alg:main}).

\begin{algorithm}
	\caption{\Method}
	\label{alg:main}
	\begin{algorithmic}
		\STATE $\beta \gets 0$
		\REPEAT
			\STATE $q_\beta \gets \mathop{\arg\max}_q \: \beta \mathbb{E}_q f - D_{KL}(q\|p)$ (section 3.2)
			\STATE Estimate $D_{KL}(q_\beta\|p)$, $\mathbb{E}_{q_\beta} f$, $Var_{q_\beta}(f)$ and $\mathbb{E}_{q_\beta} \left( f - \mathbb{E}_{q_\beta} f\right)^3$ by Monte-Carlo sampling
			\STATE Do a second-order update of $\beta$ using Eq. \ref{eq:d_beta} and \ref{eq:d_beta2}.
		\UNTIL{convergence of $\beta$}
	\end{algorithmic}
\end{algorithm}

\subsection{Building \texorpdfstring{$q_\beta$}{q-beta}}
\label{sec:VI}
It is seen that Eq. \ref{eq:opt_q} essentially defines a Variational Inference (VI) problem for each  $\beta$ value. This problem is reformulated using the Evidence Lower Bound (ELBO) \citep{bishop_approximating_1998}:
\begin{equation}
    q_\beta = \mathop{\arg\max}_{q} \; H(q) + \mathop{\mathbb{E}}_{x\sim q} \left[ \beta f(x) + \log p(x) \right] \label{eq:q_vi}
\end{equation}
with $H(q)$ the entropy of $q$.

VI is intensively used for generative modelling, optimizing $q$ based on samples of the true distribution. The optimization of the ELBO \citep{ranganath_black_2014} classically proceeds by leveraging stochastic optimization \citep{hoffman_stochastic_2013} or building upon the reparametrization trick \citep{kingma_auto-encoding_2014}. The distribution space is chosen to efficiently approximate the posterior beyond the mean-field approximation, using low-rank Gaussian distributions \citep{ong_gaussian_2018}, mixtures of Gaussian distributions \citep{gershman_nonparametric_2012}, or mixtures of an arbitrary number of distributions via boosting methods \citep{guo_boosting_2017,miller_variational_2017}. 
An alternative is offered by Normalizing Flows, where the neural architecture achieves an invertible transformation enabling its Jacobian to be analytically determined, thereby supporting the approximation of the posterior distributions \citep{rezende_variational_2015,kingma_improving_2017}.
The use of stochastic equations such as Langevin Monte-Carlo \citep{welling_bayesian_2011} can also be used to directly sample from the target distribution, without explicitly modelling it beforehand.

In the considered context, VI is used to tune an existing $q$ after $f$. Note that $q_\beta$ mostly specializes the initial generative model $p$ (as opposed to, exploring the very low probability regions of $p$, which would significantly degrade  $D_{KL}(q_\beta\|p)$). Therefore $q_\beta$ will expectedly have its typical set \citep{nalisnick_detecting_2019} roughly included in the typical set of $p$. 
Accordingly, $q$ is sought via deterministically perturbing the samples drawn according to $p$ (returning
$ x = g(\hat x)$
with $\hat x$ sampled from $p$ and $g: \mathcal{X} \mapsto \mathcal{X}$ the perturbation). The Normalized Flow neural architecture\footnote{The study of other neural architectures is left for further work.} is used to find $g$, for it makes its Jacobian explicit and easy to compute its determinant. With $J(g)$ the Jacobian matrix of $g$, it comes: 
\begin{equation}
    q(x) = p(\hat{x}) \left|J(g)(\hat{x})\right|^{-1}
\end{equation}
The optimization problem (Eq. \ref{eq:q_vi}) then reads: 
\begin{equation}\mbox{Find ~} g = 
    \mathop{\arg\max}_g \mathop{\mathbb{E}}_{\hat{x}\sim p} \left[\beta f(g(\hat{x})) + \log p(g(\hat{x})) + \log \left|J(g)(\hat{x})\right| \right]
\end{equation}
Some care is exercised at the initialization of Algorithm \ref{alg:main}, setting $g$ very close to identity; the subsequent iterations proceed by warm-start, setting $g_i$ to the $g_{i-1}$ learned in the previous iteration. 

\subsection{Operating in latent space}
\label{sec:apply-to-latent}
The use of latent space is pervasive in generative modelling, notably for the sake of dimensionality reduction.
The samples in the latent space (drawn after some simple, usually Gaussian, prior distribution $p(z)$) are mapped onto the instance space by the decoder module $p(x|z)$, in a deterministic ($x = \mathop{\mathbb{E}} dec(z)$) or probabilistic ($x \sim dec(z)$) way. The generative model is $p(x) = \int_z p(x|z)p(z)dz$.\footnote{Note that in the VAE case, $p(x|z)$ can be considered as a quasi deterministic distribution when using an observation model with small variance.}

Most interestingly, \Method\ can operate in the latent space too, tuning the latent distribution $p(z)$ and yielding a tuned latent distribution noted  $q_\beta(z)$. The sought tuned distribution $q_\beta(x)$ in the instance space is derived from $q_\beta(z)$ through the decoder module:
$$q_\beta(x) = \int_z p(x|z)q_\beta(z)dz $$

Operating on the latent space with a frozen decoder module offers several advantages. Firstly, the optimization criterion remains well defined, with 
$D_{KL}(q_\beta(z)\|p(z))$ an upper bound of $D_{KL}(q_\beta(x)\|p(x))$ (Appendix \ref{app:kl-bound}). 
Secondly, conducting the optimization process in the latent space is easier and yields more robust results, due to the dimension of the latent space being usually lower than that of $\mathcal{X}$ by one or several orders of magnitude, and the generative distribution $p(z)$ being usually a simple one, e.g. $\mathcal{N}(0; Id)$. 
Last, freezing the decoder ensures that the support of the eventual generative model remains included in the support of the initial one.
Formally, applying \Method\ in the latent space amounts to replacing \objective\ $f$ by $\hat{f}$ defined on the latent space as\footnote{If $p(x|z)$ is deterministic or has a low variance, the expectation can be well approximated by a single sample.}:
\begin{equation}
    \label{eq:alt-f}
    \hat{f}(z) = \mathbb{E}_{x \sim p(x|z)} f(x)
\end{equation}

\section{Case studies}
\label{sec:expes}
This section reports on four case studies conducted with \Method. The code is available in supplementary material. 

\def\fl{\mbox{$f_{log.h}$}}
\def\fh{\mbox{$f_h$}}

\subsection{Conditional generative modelling}
\label{sec:app-cls}
\Method\ covers conditional generative modelling as a particular case. 
In a supervised learning context, with $h$ an (independently trained) classifier 
and $h(\ell|x)$ the probability of $x$ to be labelled as $\ell$, 
let \objective\ $f$ be set to $\log h(\ell|x)$ in  order to bias the generative model toward class $\ell$. Model $q_\beta$ reads:
\begin{equation}
    q_\beta(x) \propto p(x) h(\ell|x)^\beta
\end{equation}
defining a standard conditional generative model of class $\ell$ for $\beta = 1$ (assuming that $h(\ell|x)$ accurately estimates $p(\ell|x)$). Through parameter $\beta$, one can also control the fraction of samples closest to class $\ell$, by setting the constraint $D_{KL}(q_\beta\|p) \le - log(\rho)$ with $\rho$ the mass of the desired fraction (Fig. \ref{fig:mnist-cls}, Appendix \ref{app:bold-mnist}). 
In the same spirit, \Method\ can be used to debug classifier $h$, e.g. by generating samples in the ambiguous regions at the frontier of two or several classes (e.g. using as \objective $f$  the probability of the second most probable class or the entropy of the prediction of the classifier), and inspecting $h$ behavior in this region. 

\subsection{Assessing \objectives\ {\em ex ante}}
\label{sec:method-compare-f}
As said, an \objective\ $f$ can be represented in a number of ways, e.g. considering all $g \circ f$ with $g$ a monotonous function; still, the associated optimization problems (Eq. \ref{eq:q_vi}) are  in general of varying difficulty. In order to facilitate the usage of \Method\ and avoid a tedious trials and errors phase, 
some way of comparing {\em a priori} two \objectives\ is thus desirable.

It is easy to see that the Pareto front of \Method\ solutions (section \ref{sec:Pareto}) is invariant under affine transformations\footnote{The addition of a constant is cancelled out by the normalisation constant of $q_\beta$, and a multiplicative transform resulting in choosing another $\beta$ value.} of $f$. In the following, any \objective\ $f$ is normalized via an affine transformation (below), yielding an expectation and variance under $p$  respectively set to $0$ and $1$.

Informally, the difficulty of the optimization problem reflects how much $p$ has to be transformed to match $q_\beta$. This difficulty can be quantified from the log ratio of $p$ and $q_\beta$, specifically measuring whether this log ratio is subject to fast variations. A measure of difficulty thus is the norm of $\nabla_x \log \frac{q_\beta(x)}{p(x)}$. Note that the distribution of this gradient norm can be empirically estimated:
\begin{equation}
    \nabla_x \log \frac{q_\beta(x)}{p(x)} = \beta \nabla_x f(x)
\end{equation}
Overall, samples generated from $p$ are used:  i/ to normalize the candidate \objectives; ii/ to estimate the distribution of their gradient norm; and iii/ to compare two \objectives\ and prefer the one with more regular distribution,  as defining a smoother optimization problem. 
This analysis extends to the tuning of generative models in latent space, replacing $f$ with $\hat{f}$ (Eq. \ref{eq:alt-f}).

The methodology is illustrated in the conditional modelling context (section \ref{sec:app-cls}), to compare the two \objectives\ $f(x) = h(\ell|x)$ and $f(x) = log (h(\ell|x))$, respectively referred to as \fl\ and \fh. The distribution of their gradients under $p$ is displayed on Fig. \ref{fig:mnist-cls-grad}. The binned distribution of the gradient norms in latent space for all ten classes, is estimated from 10,000 samples (truncated for readability: the highest values for the gradient norm of \fh\ go up to 60-100, to be compared to 10-15 for the gradient norm of \fl).

\begin{figure}
    \centering
    \includegraphics[width=\columnwidth]{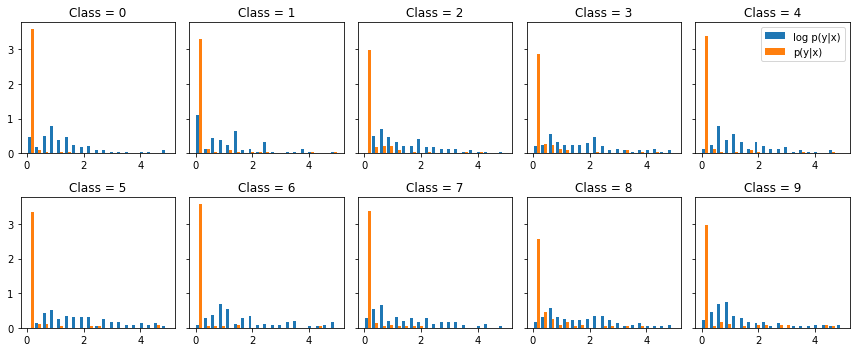}
    \caption{Comparing \objectives \fl\ (in blue) and \fh\ (in orange) on MNIST: binned distribution of their gradient norms (better seen in color). The distribution tails are truncated for the sake of visualization, see text.}
    \label{fig:mnist-cls-grad}
\end{figure}
The distribution of the \fh\ gradient norm shows a high mass on 0 with quite some high values, suggesting a complex optimization landscape with a number of plateaus (gradient norm 0) separated by sharp boundaries (high gradient norms). In opposition, the distribution of the \fl\ gradient norm is flatter with a more compact support, suggesting a manageable optimization landscape where the gradient offers some (bounded) information in most regions. 
Accordingly, it is suggested  \fl\ is much more amenable to the tuning of the generative model than \fh, which is empirically confirmed (Appendix \ref{app:grad}). Overall, the proposed methodology allows to efficiently and inexpensively compare {\em a priori}  candidate \objectives, and retain the most convenient one.

\subsection{A real-world case study}
\label{sec:app-elec}
This section focuses on using \Method\ as an alternative to rejection sampling on the real-world problem of smart grid energy management and dimensioning. For the sake of reproducibility, an experiment on MNIST along the same rejection sampling ideas is detailed in Appendix \ref{app:bold-mnist}.

The goal is to sample the extreme energy consumption aggregated curves under a number of usage scenarii (e.g. traffic schedules, localisation of electric car charging stations, telecommuting and its prevalence), to estimate the peak consumption.
The aggregation of multiple consumers into a single consumption curve tends to smooth the consumption peak, 
as measured by the so-called diversity factor \citep{sarfraz_update_2018}. The difficulty is that the relationship between the aggregated and the individual consumption curves is ill-known, essentially studied by Monte-Carlo sampling, making it desirable to design a flexible generative model of aggregated consumption curves. 

In a preliminary phase, a VAE is trained on weekly consumption curves to model the aggregated consumption of 10 households (Fig. \ref{fig:elec-data} and \ref{fig:elec-p}).
A first \objective\ $f_1$ considers the maximum consumption reached over the week, with the aim to sample the $1\%$ top quantile of the curves (yielding $C_D = -\log {10^{-2}} = 4.61$).
The tuned generative model (Fig. \ref{fig:elec-q} and \ref{fig:elec-mean-std}) sample curves with a significantly higher peak consumption; note that these curves have a high weekly consumption, too. Indeed, the generative model makes it more likely to reach a high peak during a high consumption week than in an average consumption week (e.g. due to external factors such as cold weather). The freezing of the decoder enables to preserve the plausibility of the generated samples, while sampling in the extreme regions of the distribution according to $f_1$.

A second \objective\ $f_2$, concerned with maximizing the difference between the mean consumption on Wednesdays and the mean consumption over the whole week, is considered to illustrate the versatility of \Method\ (Fig. \ref{fig:elec-wed-q} and \ref{fig:elec-wed-mean-std}). Other choices of $f$ are  discussed in Appendix \ref{app:electrical}.

\begin{figure}
    \centering
    \begin{subfigure}[t]{0.48\textwidth}
        \includegraphics[width=\columnwidth]{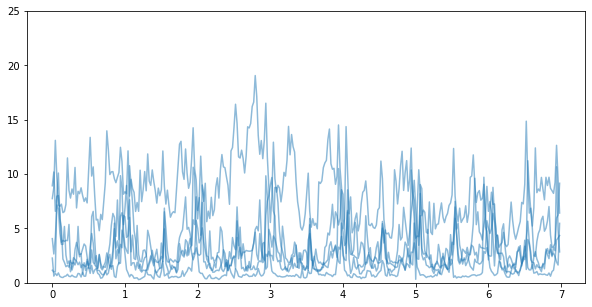}
        \caption{5 true consumption curves.}
        \label{fig:elec-data}
    \end{subfigure}
    \hfill
    \begin{subfigure}[t]{0.48\textwidth}
        \includegraphics[width=\columnwidth]{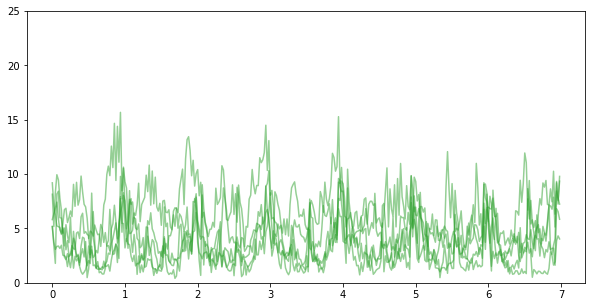}
        \caption{5 samples from the unbiased model $p$.}
        \label{fig:elec-p}
    \end{subfigure}
    \begin{subfigure}[t]{0.48\textwidth}
        \includegraphics[width=\columnwidth]{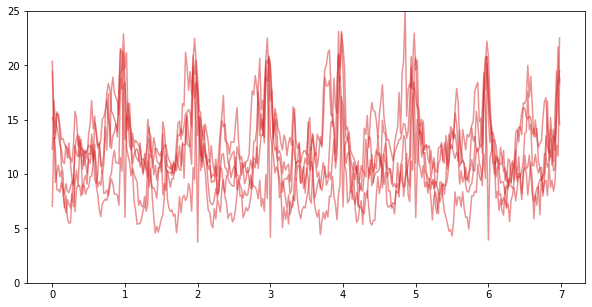}
        \caption{5 samples from the model $q_\beta$ tuned to maximize peak consumption.}
        \label{fig:elec-q}
    \end{subfigure}
    \hfill
    \begin{subfigure}[t]{0.48\textwidth}
        \includegraphics[width=\columnwidth]{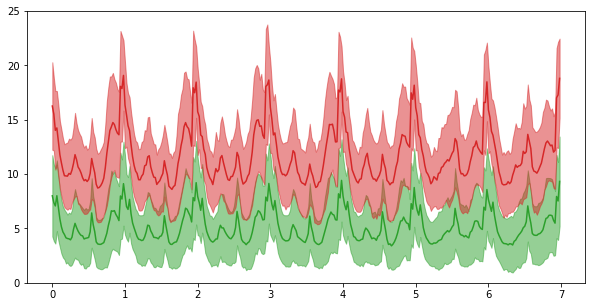}
        \caption{Mean and standard deviation of samples generated after $p$ and $q_\beta$ tuned to maximize peak consumption.}
        \label{fig:elec-mean-std}
    \end{subfigure}
    \begin{subfigure}[t]{0.48\textwidth}
        \includegraphics[width=\columnwidth]{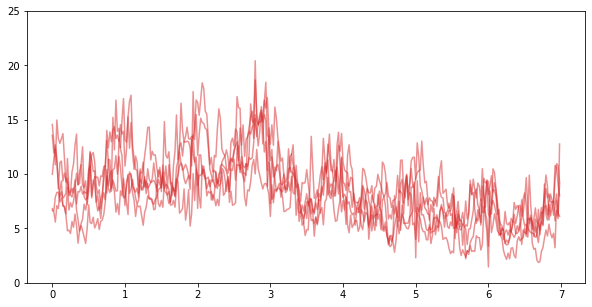}
        \caption{5 samples from the tuned model $q_\beta$ tuned to maximize Wednesday's consumption only.}
        \label{fig:elec-wed-q}
    \end{subfigure}
    \hfill
    \begin{subfigure}[t]{0.48\textwidth}
        \includegraphics[width=\columnwidth]{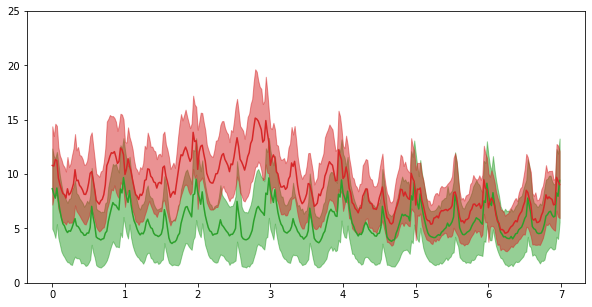}
        \caption{Mean and standard deviation of samples generated after $p$ and $q_\beta$ tuned to maximize Wednesday's consumption only.}
        \label{fig:elec-wed-mean-std}
    \end{subfigure}
    \caption{Applying \Method\ to tune the generation of weekly energy consumption curves, reporting the consumption (in kW on $y$ axis) vs the day (on $x$ axis). \\Top: real sample curves (a) and $p$-generated samples, with $p$ the initial VAE model (b). \\Middle row: tuning $p$ toward top 1\% weekly energy consumption curves (\objective $f_1$); tuned generated samples (c), and comparison of  $p$ with $p$ tuned after $f_1$ (d).\\ Bottom row:  tuning $p$ toward top 1\% Wednesday energy consumption curves (\objective $f_2$, see text);   tuned generated samples (e), and comparison of $p$ with $p$ tuned after $f_2$ (f). The VAE $p$, composed of encoder and decoder modules with 10 blocks of residual networks each, is trained from ca 8 million weekly consumption curves; the mean and deviation of the initial and tuned generative models are computed over 1,000 samples. Better seen in color.}
\end{figure}

\subsection{Refining a generative model {\em a posteriori}}
\label{sec:fine-tune-gm}

Another potential usage of \Method\ is to refine existing generative models, e.g. preventing a VAE from generating out-of-distribution samples \citep{arjovsky_towards_2017}. 
\def\pv{\mbox{$p_{VAE}$}}
Let \pv\ denote an overly general generative model, and let $f$ be defined as a adversarial classifier, discriminating the generated samples from the true data distribution $p_D$. When converged and in the large sample limit, the discriminator yields an estimation of $\frac{p_D(x)}{p_D(x) + p_{VAE}(x)}$ \citep{goodfellow_generative_2014}. 

When using \objective\ $f(x) = \log\frac{p_D(x)}{p_{VAE}(x)}$, given by the pre-activation output of the discriminator, to tune model \pv, one gets the generative model $q_\beta$ defined as: 
\begin{equation}
    q_\beta(x) \propto p_{VAE}(x)^{1-\beta}p_D(x)^\beta
\end{equation}
In this scheme, \Method\ aims to actually draw the generative model closer to the true distribution $p_D$. Compared to the mainstream GAN scheme, the difference is that the discriminator is used {\em a posteriori}: the generative modelling is decoupled from its adversarial tuning and the concurrent training procedure is replaced by the sequence of two (comparatively straightforward) optimization procedures, firstly training $\pv$ and secondly tuning it toward $f$. Results illustrating the proposed methodology are presented in Appendix \ref{app:adversarial}.
This sequential adversarial generative modelling relies on two interdependent assumptions. Firstly, \pv\ must  be able to accurately reconstruct the whole training dataset;
more precisely, the support of distribution \pv\ must cover that of the data distribution $p_D$. Secondly, the discriminator needs be not saturated and give highly-confident predictions, for its gradient to provide sufficient information to refine \pv\ (this also requires the former assumption to hold).

\section{Discussion and Perspectives}
\label{sec:discu}
The contribution of the paper is a new theoretical formulation and algorithm for the {\em a posteriori} refinement of a wide class of generative models,  including GANs, VAEs, and explicit likelihood models. When the considered generative model relies on the use of a latent space, \Method\ can operate directly in  the latent space, favoring the scalability of the approach w.r.t. high-dimensional spaces. 
\Method\ offers a new alternative to rejection sampling in order to explore the extreme quantiles of the data distribution w.r.t. any \objective\ $f$, subject to $f$ being differentiable. The proof of concept presented in the domain of energy management, where the consumption peak is estimated from the extreme quantiles of the consumption curves, is to our best knowledge the first and only alternative to rejection sampling in this context. 

Three perspectives for further work are considered. In the short term, a first goal is to use \Method\ to better understand when and why the dropping phenomenon occurs in the adversarial setting. On-going results show that a VAE model can indeed be refined {\em a posteriori} using a discriminator as \objective $f$; however, it is observed that mode dropping does appear when the pressure on $f$ is increased beyond a certain level. In order to avoid this loss of diversity, a research perspective is to extend \Method\ to the general multi-criteria optimization setting, tuning the considered generative models with several criteria (e.g. the discriminator $f$, and the lequi-distribution of the classes).

A second perspective is to use \Method\ in the context of privacy-sensitive data. The use of generative models for releasing non-sensitive though realistic samples has been explored \citep{torkzadehmahani_dp-cgan_2019,long_scalable_2019,augenstein_generative_2020}. \Method\ makes it feasible to train a model from large datasets (thus offering a better model with better privacy guarantees) and focus it {\em a posteriori} on the target of interest, e.g. a rare mode of a disease. The eventual biased generative model will expectedly both inherit the privacy guarantees of the general model, and yield the focused samples as desired.

Another perspective is to extend \Method\ in the direction of Bayesian Optimization \citep{mockus_application_1978,rasmussen_gaussian_2004}, and Interactive Preference Learning pioneered by \citep{brochu_tutorial_2010,viappiani_recommendation_2011}. Specifically in the context of Optimal Design, the expert-in-the-loop setting can be leveraged to alternatively bias the generative model toward the experts' preferences, and learn a model of their preferences. While facing the challenges of interactive preference learning,  this approach would pave the way toward a focused augmentation of the data, under the experts' control.

\bibliographystyle{abbrvnatnourl}
\bibliography{bibliography}

\newpage

\appendix

\section{Closed form derivatives of \texorpdfstring{$D_{KL}(q\|p)$}{DKL(q||p)} and \texorpdfstring{$\mathbb{E}_q f$}{E\_q f}}

\label{app:derivatives}

From Eqs. (1-2)
$$    q_\beta(x) = \arg\min  \int_\mathcal{X} q(x)f(x)dx + \frac{1}{\beta} \int_\mathcal{X} q(x) \log\frac{q(x)}{p(x)}dx$$
it follows:
\begin{equation}
    q_\beta(x) = \frac{1}{Z(\beta)} p(x) e^{\beta f(x)} \label{eq:q_def}
\end{equation}
with normalization constant $Z(\beta)$ ensuring that $q_\beta$ is a probability distribution. The derivatives of $D_{KL}(q\|p)$ and $\mathbb{E}_q f$ follow from Lemmas 1 and 2.

\begin{lemma}
The derivative $\frac{d}{d \beta} \log Z(\beta)$ reads: 
\begin{equation}
\label{eq:d_logz}
{\frac{d}{d \beta} \log Z(\beta) = \mathbb{E}_{q_\beta} f}
\end{equation}
\end{lemma}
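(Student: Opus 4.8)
The plan is to compute $\frac{d}{d\beta}\log Z(\beta)$ directly from the definition of the normalization constant. Recall that since $q_\beta(x) = \frac{1}{Z(\beta)}p(x)e^{\beta f(x)}$ must integrate to one, we have
\begin{equation}
Z(\beta) = \int_\mathcal{X} p(x)e^{\beta f(x)}\,dx.
\end{equation}
First I would differentiate this expression with respect to $\beta$ under the integral sign, which yields $\frac{d}{d\beta}Z(\beta) = \int_\mathcal{X} f(x)\,p(x)e^{\beta f(x)}\,dx$. Then, using the chain rule, $\frac{d}{d\beta}\log Z(\beta) = \frac{1}{Z(\beta)}\frac{d}{d\beta}Z(\beta) = \int_\mathcal{X} f(x)\,\frac{p(x)e^{\beta f(x)}}{Z(\beta)}\,dx = \int_\mathcal{X} f(x)\,q_\beta(x)\,dx = \mathbb{E}_{q_\beta}f$, which is exactly the claimed identity. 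This is the classical fact that the derivative of the log-partition function (cumulant generating function) of an exponential family is the mean of the sufficient statistic.

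The key steps, in order, are: (i) write $Z(\beta)$ as the integral of $p(x)e^{\beta f(x)}$; (ii) differentiate under the integral sign to obtain $Z'(\beta)$; (iii) apply $\frac{d}{d\beta}\log Z = Z'/Z$ and recognize the integrand as $f(x)q_\beta(x)$; (iv) conclude that the integral equals $\mathbb{E}_{q_\beta}f$.

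The only genuine obstacle is the justification of differentiation under the integral sign, i.e. exchanging $\frac{d}{d\beta}$ and $\int_\mathcal{X}$. This requires a dominated-convergence-type argument: on any bounded neighborhood of a fixed $\beta$, one needs an integrable (with respect to $p$) dominating function for $|f(x)|e^{\beta' f(x)}$. This holds under mild regularity assumptions on $f$ and $p$ (for instance, $f$ bounded, or $f$ having sufficiently light tails under $p$ so that $Z(\beta)$ is finite and smooth on an open interval), which are implicitly assumed throughout the paper since the moments of $f$ under $q_\beta$ are taken to exist. I would simply invoke these standing assumptions rather than dwell on the measure-theoretic technicalities.
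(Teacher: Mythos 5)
Your proposal follows exactly the same route as the paper's proof: write $Z(\beta)=\int_{\mathcal{X}} p(x)e^{\beta f(x)}\,dx$, differentiate under the integral sign, divide by $Z(\beta)$, and recognize the integrand as $f(x)q_\beta(x)$. The additional remark on justifying the interchange of derivative and integral is a welcome (if standard) refinement that the paper leaves implicit.
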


\begin{proof}
As $Z(\beta) = \int_x p(x) e^{\beta f(x)} dx$ by definition, it follows: 

\begin{equation}
    \begin{aligned}
        \frac{d}{d \beta} \log Z(\beta) &= \frac{1}{Z(\beta)} \frac{d}{d \beta} Z(\beta) \\
        &= \frac{1}{Z(\beta)} \frac{d}{d \beta} \int_{\cal X} p(x) e^{\beta f(x)} dx \\
        &= \frac{1}{Z(\beta)} \int_{\cal X} f(x) p(x) e^{\beta f(x)} dx \\
        &= \int_x f(x) q_\beta(x) \\
        &= \mathbb{E}_{q_\beta} f
    \end{aligned}
\end{equation}
\end{proof}

\begin{lemma}
Let $h: \mathcal{X} \rightarrow \mathbb{R}$ be a function (possibly depending on $\beta$). The derivative of its expectation on $q_\beta$ wrt $\beta$ reads:

\begin{equation}
    \label{eq:d_e_h}
    {\frac{d}{d \beta} \mathbb{E}_{q_\beta} h = \mathbb{E}_{q\beta}\left[fh + \frac{\partial h}{\partial \beta} \right] - \left(\mathbb{E}_{q_\beta} f\right) \left(\mathbb{E}_{q_\beta} h\right)}
\end{equation}
\end{lemma}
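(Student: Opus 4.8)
The plan is to differentiate the expectation $\mathbb{E}_{q_\beta} h = \int_{\mathcal X} h(x) q_\beta(x)\, dx$ directly, using the explicit form $q_\beta(x) = \frac{1}{Z(\beta)} p(x) e^{\beta f(x)}$ from Eq.~\eqref{eq:q_def}. Writing $\mathbb{E}_{q_\beta} h = \frac{1}{Z(\beta)} \int_{\mathcal X} h(x) p(x) e^{\beta f(x)}\, dx$, I would apply the product rule to the three $\beta$-dependent factors: the prefactor $1/Z(\beta)$, the integrand function $h$ (which may itself depend on $\beta$, contributing $\partial h/\partial\beta$), and the exponential $e^{\beta f(x)}$ (contributing a factor $f(x)$ upon differentiation). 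Differentiation under the integral sign is taken for granted here, as is implicitly done for Lemma~1.

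The key steps, in order: First, use the quotient/product rule to get
\[
\frac{d}{d\beta}\mathbb{E}_{q_\beta} h = -\frac{Z'(\beta)}{Z(\beta)^2}\int h\, p\, e^{\beta f}\, dx + \frac{1}{Z(\beta)}\int \Big(\frac{\partial h}{\partial\beta} + f h\Big) p\, e^{\beta f}\, dx.
\]
Second, recognize that each integral divided by $Z(\beta)$ is an expectation under $q_\beta$: the first term becomes $-\frac{Z'(\beta)}{Z(\beta)}\,\mathbb{E}_{q_\beta} h$, and the second becomes $\mathbb{E}_{q_\beta}\!\left[fh + \frac{\partial h}{\partial\beta}\right]$. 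Third, invoke Lemma~1, which gives $\frac{Z'(\beta)}{Z(\beta)} = \frac{d}{d\beta}\log Z(\beta) = \mathbb{E}_{q_\beta} f$, and substitute to obtain exactly Eq.~\eqref{eq:d_e_h}.

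There is no real obstacle here — the result is essentially the statement that $\beta$ is a natural (canonical) parameter of the exponential family $\{q_\beta\}$ with sufficient statistic $f$, so that derivatives in $\beta$ produce covariances with $f$; Lemma~2 is the general form of this identity allowing the observable $h$ to carry its own $\beta$-dependence. The only point requiring minor care is keeping the two sources of $\beta$-dependence in $\mathbb{E}_{q_\beta} h$ distinct — the dependence through the measure $q_\beta$ and the dependence through $h$ itself — which is precisely why the $\partial h/\partial\beta$ term appears inside the expectation while the $fh$ and $-(\mathbb{E}_{q_\beta} f)(\mathbb{E}_{q_\beta} h)$ terms come from differentiating the measure. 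Specializing $h = f$ (with $\partial f/\partial\beta = 0$) recovers $\frac{d}{d\beta}\mathbb{E}_{q_\beta} f = \mathrm{Var}_{q_\beta}(f)$, and then iterating Lemma~2 with $h = f - \mathbb{E}_{q_\beta} f$ (now genuinely $\beta$-dependent, with $\partial h/\partial\beta = -\mathrm{Var}_{q_\beta}(f)$) yields the second-order derivatives in Eq.~\eqref{eq:d_beta2}, so the lemma is exactly the engine needed for all the moment formulas in Section~\ref{sec:Pareto}.
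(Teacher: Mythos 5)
Your proof is correct and follows essentially the same route as the paper's: differentiate $\frac{1}{Z(\beta)}\int h\,p\,e^{\beta f}\,dx$ by the product rule, identify the resulting integrals as expectations under $q_\beta$, and substitute $\frac{d}{d\beta}\log Z(\beta) = \mathbb{E}_{q_\beta} f$ from Lemma~1. One small slip in your closing aside (not part of the proof itself): the second derivative of $\mathbb{E}_{q_\beta} f$ is obtained by applying the lemma to $h = \left(f - \mathbb{E}_{q_\beta} f\right)^2$, not to $h = f - \mathbb{E}_{q_\beta} f$, whose expectation is identically zero.
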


\begin{proof}
\begin{equation}
\begin{aligned}
\frac{d}{d \beta} \mathbb{E}_{q_\beta} h &= \frac{d}{d \beta} \frac{1}{Z(\beta)}\int_x h(x) p(x) e^{\beta f(x)} dx \\
&= \frac{1}{Z(\beta)} \int_x \left(h(x)f(x) + \frac{\partial h}{\partial \beta}(x)\right) p(x) e^{\beta f(x)} dx \\
& \quad - \frac{1}{Z(\beta)^2} \frac{d Z}{d \beta} \int_x h(x) p(x) e^{\beta f(x)} dx \\
&= \mathbb{E}_{q_\beta} \left[ hf + \frac{\partial h}{\partial \beta} \right] - \left(\mathbb{E}_{q_\beta} h \right) \frac{d}{d \beta} \log Z(\beta) \\
&= \mathbb{E}_{q\beta}\left[fh + \frac{\partial h}{\partial \beta} \right] - \left(\mathbb{E}_{q_\beta} f\right) \left(\mathbb{E}_{q_\beta} h\right)
\end{aligned}
\end{equation}
\end{proof}

Lemmas 1 and 2 yield the first and second derivatives of $\mathbb{E}_{q_\beta} f$.

\begin{lemma}
The first and second derivatives of $\mathbb{E}_{q_\beta} f$ wrt $\beta$ read:

\begin{equation}
    {\frac{d}{d \beta} \mathbb{E}_{q_\beta} f = Var_{q_\beta} f} \quad \text{and} \quad {\frac{d^2}{d \beta^2} \mathbb{E}_{q_\beta} f = \mathbb{E}_{q_\beta} \left( f - \mathbb{E}_{q_\beta} f\right)^3 }
\end{equation}
\end{lemma}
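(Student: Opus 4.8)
The plan is to derive both derivatives directly from Lemma 2 by making suitable choices of the auxiliary function $h$, avoiding any fresh integration by parts.

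First, for $\frac{d}{d\beta}\mathbb{E}_{q_\beta} f$, I would apply Lemma 2 with $h = f$. Since $f$ does not depend on $\beta$, the term $\frac{\partial h}{\partial\beta}$ vanishes, and Lemma 2 immediately gives $\frac{d}{d\beta}\mathbb{E}_{q_\beta} f = \mathbb{E}_{q_\beta}[f^2] - (\mathbb{E}_{q_\beta} f)^2 = Var_{q_\beta}(f)$. This is the easy half.

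For the second derivative, I would differentiate $Var_{q_\beta}(f) = \mathbb{E}_{q_\beta}[f^2] - (\mathbb{E}_{q_\beta} f)^2$ term by term. The first term, $\frac{d}{d\beta}\mathbb{E}_{q_\beta}[f^2]$, is handled by Lemma 2 with $h = f^2$ (again $\beta$-independent), giving $\mathbb{E}_{q_\beta}[f^3] - \mathbb{E}_{q_\beta}[f^2]\,\mathbb{E}_{q_\beta}[f]$. The second term, $\frac{d}{d\beta}(\mathbb{E}_{q_\beta} f)^2 = 2\,\mathbb{E}_{q_\beta}[f]\cdot\frac{d}{d\beta}\mathbb{E}_{q_\beta}[f] = 2\,\mathbb{E}_{q_\beta}[f]\,Var_{q_\beta}(f)$, using the first derivative just computed. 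Subtracting, I would expand everything in terms of the raw moments $m_k = \mathbb{E}_{q_\beta}[f^k]$ and check that the result equals $m_3 - 3 m_1 m_2 + 2 m_1^3$, which is exactly the expansion of the centered third moment $\mathbb{E}_{q_\beta}(f - \mathbb{E}_{q_\beta} f)^3$.

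The only mild obstacle is bookkeeping: one must be careful to expand $\mathbb{E}_{q_\beta}(f-\mathbb{E}_{q_\beta}f)^3 = m_3 - 3m_1 m_2 + 3 m_1^2 m_1 - m_1^3 = m_3 - 3 m_1 m_2 + 2 m_1^3$ and match it against $(m_3 - m_1 m_2) - 2 m_1(m_2 - m_1^2)$, which indeed simplifies to the same expression. No convergence or differentiation-under-the-integral concerns arise here, since those are already implicitly granted in the statements of Lemmas 1 and 2 which we are entitled to use.
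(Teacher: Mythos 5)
Your proof is correct. The first derivative is obtained exactly as in the paper, by applying Lemma 2 with $h=f$. For the second derivative you take a slightly different route: the paper writes $Var_{q_\beta}(f)=\mathbb{E}_{q_\beta}(f-\mathbb{E}_{q_\beta}f)^2$ and applies Lemma 2 with the $\beta$-dependent choice $h=(f-\mathbb{E}_{q_\beta}f)^2$, whereupon the correction term $-2(f-\mathbb{E}_{q_\beta}f)\frac{d}{d\beta}\mathbb{E}_{q_\beta}f$ dies because $\mathbb{E}_{q_\beta}[f-\mathbb{E}_{q_\beta}f]=0$, and the centered third moment appears immediately. You instead use the raw-moment decomposition $Var_{q_\beta}(f)=\mathbb{E}_{q_\beta}[f^2]-(\mathbb{E}_{q_\beta}f)^2$, which lets you invoke Lemma 2 only for $\beta$-independent functions ($h=f^2$) plus the chain rule, at the price of the moment bookkeeping $m_3-3m_1m_2+2m_1^3$ — which you carry out correctly and match against the expansion of $\mathbb{E}_{q_\beta}(f-\mathbb{E}_{q_\beta}f)^3$. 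Both arguments are sound; the paper's is marginally slicker because the cancellation is structural rather than computational, while yours has the minor advantage of never needing the $\partial h/\partial\beta$ term of Lemma 2.
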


\begin{proof}
Replacing $h$ with $f$ in Eq. \ref{eq:d_e_h}, and noting that $f$ does not depend on $\beta$, yields the first derivative:
\begin{equation}
    \frac{d}{d \beta} \mathbb{E}_{q_\beta} f = \mathbb{E}_{q_\beta} f^2 - \left(\mathbb{E}_{q_\beta}f \right)^2 = Var_{q_\beta} f
\end{equation}
Noting that $Var_{q_\beta} f = \mathbb{E}_{q_\beta}\left( f - \mathbb{E}_{q_\beta} f\right)^2$ and replacing $h$ with $\left(f - \mathbb{E}_{q_\beta} f\right)^2$ (that does depend on $\beta$) in Eq. \ref{eq:d_e_h} yields the second derivative:

\begin{equation}
\begin{aligned}
    \frac{d^2}{d \beta^2} \mathbb{E}_{q_\beta} f &= \frac{d}{d \beta} \mathbb{E}_{q_\beta}\left( f - \mathbb{E}_{q_\beta} f\right)^2\\
    &= \mathbb{E}_{q_\beta} \left[ f \left( f - \mathbb{E}_{q_\beta} f\right)^2 - 2\left( f - \mathbb{E}_{q_\beta} f\right) \frac{d}{d \beta} \mathbb{E}_{q_\beta} f \right] - \left(\mathbb{E}_{q_\beta} f\right)\left(\mathbb{E}_{q_\beta}\left( f - \mathbb{E}_{q_\beta} f\right)^2\right) \\
    &= \mathbb{E}_{q_\beta} \left( f - \mathbb{E}_{q_\beta} f\right)^3 - 2 \underbrace{\mathbb{E}_{q_\beta} \left[ f - \mathbb{E}_{q_\beta} f\right]}_{= 0} \frac{d}{d \beta} \mathbb{E}_{q_\beta} f \\
    &= \mathbb{E}_{q_\beta} \left( f - \mathbb{E}_{q_\beta} f\right)^3
\end{aligned}
\end{equation}
\end{proof}

Lemmas 1 and 2 likewise yield the first and second derivatives of $D_{KL}(q_\beta \| p)$:

\begin{lemma}
The first and second derivatives of $\mathbb{E}_{q_\beta} f$ wrt $\beta$ read:

\begin{equation}
    {\frac{d}{d \beta} D_{KL}(q_\beta \| p) = \beta Var_{q_\beta} f} \quad \text{and} \quad {\frac{d^2}{d \beta^2} D_{KL}(q_\beta \| p) = Var_{q_\beta} f + \beta \mathbb{E}_{q_\beta} \left( f - \mathbb{E}_{q_\beta} f\right)^3}
\end{equation}
\end{lemma}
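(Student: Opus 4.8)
The plan is to reduce the claim to the two identities already established for $\log Z(\beta)$ and for $\mathbb{E}_{q_\beta} f$ in Lemmas 1 and 3, avoiding any fresh integral manipulation. The key observation is that the Kullback-Leibler divergence admits a closed form in $\beta$: since $q_\beta(x) = \frac{1}{Z(\beta)} p(x) e^{\beta f(x)}$, we have $\log \frac{q_\beta(x)}{p(x)} = \beta f(x) - \log Z(\beta)$, and taking expectation under $q_\beta$ gives
\begin{equation}
D_{KL}(q_\beta \| p) = \beta\, \mathbb{E}_{q_\beta} f - \log Z(\beta).
\end{equation}
So the first step is simply to record this decomposition.

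For the first derivative, I would differentiate the above with respect to $\beta$, using the product rule on the first term: $\frac{d}{d\beta} D_{KL}(q_\beta\|p) = \mathbb{E}_{q_\beta} f + \beta \frac{d}{d\beta}\mathbb{E}_{q_\beta} f - \frac{d}{d\beta}\log Z(\beta)$. By Lemma 1 the last term equals $\mathbb{E}_{q_\beta} f$, cancelling the first term, and by Lemma 3 we have $\frac{d}{d\beta}\mathbb{E}_{q_\beta} f = Var_{q_\beta} f$, leaving exactly $\beta\, Var_{q_\beta} f$.

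For the second derivative, I would differentiate $\beta\, Var_{q_\beta} f$ again by the product rule: $\frac{d^2}{d\beta^2} D_{KL}(q_\beta\|p) = Var_{q_\beta} f + \beta \frac{d}{d\beta} Var_{q_\beta} f$. Since $Var_{q_\beta} f = \frac{d}{d\beta}\mathbb{E}_{q_\beta} f$, its derivative is the second derivative of $\mathbb{E}_{q_\beta} f$, which Lemma 3 gives as $\mathbb{E}_{q_\beta}(f - \mathbb{E}_{q_\beta} f)^3$. Substituting yields $Var_{q_\beta} f + \beta\, \mathbb{E}_{q_\beta}(f - \mathbb{E}_{q_\beta} f)^3$, as claimed. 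There is essentially no obstacle here beyond the standard justification for differentiating under the integral sign (needed already for Lemmas 1--3 and implicitly assumed throughout); the only point requiring a moment's care is applying the product rule correctly to the $\beta$-dependent factor $Var_{q_\beta} f$, rather than treating it as constant. As an alternative one could instead invoke Lemma 2 directly with $h(x) = \log\frac{q_\beta(x)}{p(x)} = \beta f(x) - \log Z(\beta)$, whose $\beta$-partial is $f(x) - \mathbb{E}_{q_\beta} f$, but the route through the closed form above is shorter.
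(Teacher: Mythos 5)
Your proof is correct and follows essentially the same route as the paper's: both write $D_{KL}(q_\beta\|p) = \beta\,\mathbb{E}_{q_\beta} f - \log Z(\beta)$, differentiate with the product rule, and invoke Lemma 1 to cancel the $\mathbb{E}_{q_\beta} f$ terms and Lemma 3 for the derivatives of $\mathbb{E}_{q_\beta} f$ and $Var_{q_\beta} f$. No gaps.
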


\begin{proof}
By definition:

\begin{equation}
    \begin{aligned}
        D_{KL}(q_\beta \| p) &= \mathbb{E}_{q_\beta} \log \frac{q_\beta}{p} \\
        &= \mathbb{E}_{q_\beta} \left[ \beta f - \log Z(\beta) \right] \\
        &= \beta \mathbb{E}_{q_\beta} \left[ f \right] - \log Z(\beta)
    \end{aligned}
\end{equation}

Lemmas 1 and 2 thus yield: 

\begin{equation}
    \begin{aligned}
        \frac{d}{d \beta} D_{KL}(q_\beta \| p) &= \mathbb{E}_{q_\beta} \left[ f \right] + \beta \frac{d}{d \beta} \mathbb{E}_{q_\beta} \left[ f \right] - \frac{d}{d \beta} \log Z(\beta) \\
        &= \mathbb{E}_{q_\beta} \left[ f \right] + \beta Var_{q_\beta} \left[f\right] - \mathbb{E}_{q_\beta} \left[ f \right] \\
        &= \beta Var_{q_\beta} f
    \end{aligned}
\end{equation}
and:
\begin{equation}
    \begin{aligned}
        \frac{d}{d \beta} D_{KL}(q_\beta \| p) &= Var_{q_\beta} f + \beta \frac{d}{d \beta} Var_{q_\beta} f \\
        &= Var_{q_\beta} f + \beta \mathbb{E}_{q_\beta} \left( f - \mathbb{E}_{q_\beta} f\right)^3
    \end{aligned}
\end{equation}
which concludes the proof.
\end{proof}

\section{Bounding \texorpdfstring{$KL(q\|p)$}{DKL(q||p)} on latent space}

\label{app:kl-bound}

\begin{lemma}
Let $p(x,z)$ be a generative model built on a sampling of a latent space ($p(x,z) = p(z)p(x|z)$, with $p(x|z)$ the decoder mapping the latent onto the instance space). Let generative model $q(x,z)$ be defined as $q(x,z) = q(z)p(x|z)$ (freezing the decoder and modifying the latent distribution). Then: 
\begin{equation}
    {D_{KL}(q(x)\|p(x)) \leq D_{KL}(q(z)\|p(z))}
\end{equation}
\end{lemma}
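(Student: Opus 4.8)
The plan is to exploit the data-processing inequality for KL divergence, applied to the deterministic/stochastic "marginalization" map $(x,z) \mapsto x$. Concretely, I would first observe that both joint distributions share the same conditional $p(x|z)$: we have $p(x,z) = p(z)\,p(x|z)$ and $q(x,z) = q(z)\,p(x|z)$. The chain rule for KL divergence then gives two decompositions of $D_{KL}(q(x,z)\|p(x,z))$: one factoring through the $z$-marginal and the conditional $q(z|x)$ versus $p(z|x)$, and one factoring through the $x$-marginal and the conditionals $q(x|z)$ versus $p(x|z)$.

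The key step is the second decomposition:
\begin{equation}
D_{KL}(q(x,z)\|p(x,z)) = D_{KL}(q(x)\|p(x)) + \mathbb{E}_{x\sim q(x)}\left[ D_{KL}\big(q(z|x)\,\|\,p(z|x)\big) \right],
\end{equation}
while the first decomposition reads
\begin{equation}
D_{KL}(q(x,z)\|p(x,z)) = D_{KL}(q(z)\|p(z)) + \mathbb{E}_{z\sim q(z)}\left[ D_{KL}\big(q(x|z)\,\|\,p(x|z)\big) \right].
\end{equation}
Since the decoder is frozen, $q(x|z) = p(x|z)$ almost surely, so the second term on the right-hand side of the last display vanishes, leaving $D_{KL}(q(x,z)\|p(x,z)) = D_{KL}(q(z)\|p(z))$. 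Comparing with the previous display and using nonnegativity of the expected KL term $\mathbb{E}_{x\sim q(x)}[D_{KL}(q(z|x)\|p(z|x))] \ge 0$ yields $D_{KL}(q(x)\|p(x)) \le D_{KL}(q(z)\|p(z))$, which is the claim.

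The only mild obstacle is justifying the chain rule for KL in the generality needed — e.g.\ when $p(x|z)$ is a Dirac (deterministic decoder), so that the joint laws are supported on a lower-dimensional set. This is handled by noting that KL divergence is invariant under a common change of the reference measure and that the chain-rule identity $D_{KL}(q(x,z)\|p(x,z)) = D_{KL}(q(x)\|p(x)) + \mathbb{E}_{q(x)}[D_{KL}(q(z|x)\|p(z|x))]$ holds for arbitrary joint distributions admitting regular conditional probabilities (the disintegration theorem), with both sides possibly infinite simultaneously; no density assumption on $x$ is required. Alternatively, one can invoke the data-processing inequality directly: pushing $q(x,z)$ and $p(x,z)$ through the measurable map $(x,z)\mapsto x$ cannot increase KL, giving $D_{KL}(q(x)\|p(x)) \le D_{KL}(q(x,z)\|p(x,z)) = D_{KL}(q(z)\|p(z))$, where the last equality again uses the shared frozen decoder. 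I would present the data-processing version as the main argument for brevity and robustness to degeneracy.
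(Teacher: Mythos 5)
Your proposal is correct and follows essentially the same route as the paper: both decompose $D_{KL}(q(x,z)\|p(x,z))$ via the chain rule in the two possible orders, use the frozen decoder to make the joint divergence equal $D_{KL}(q(z)\|p(z))$, and drop the nonnegative conditional term to bound $D_{KL}(q(x)\|p(x))$ from above. Your added remarks on the data-processing inequality and the deterministic-decoder degeneracy are a welcome refinement but do not change the argument.
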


\begin{proof}
It is seen that, for any two distributions $q$ and $p$ of two variables, the Kullback-Leibler divergence between their marginals is always smaller than the Kullback-Leibler divergence between the full distributions:

\begin{equation}
\begin{aligned}
    D_{KL}(q(a, b)\|p(a,b)) &= \mathbb{E}_q \log \frac{q(a,b)}{p(a,b)}\\
    &= \mathbb{E}_q \log \frac{q(a)q(b|a)}{p(a)p(b|a)}\\
    &= D_{KL}(q(a)\|p(a)) + \mathbb{E}_q D_{KL}(q(b|a)\|p(b|a))\\
    &\geq D_{KL}(q(a)\|p(a))\\
\end{aligned}
\end{equation}
Replacing $p(a,b)$ with $p(x,z) = p(z)p(x|z)$ (respectively, $q(a,b)$ with $q(x,z) = q(z)p(x|z)$) yields:

\begin{equation}
\begin{aligned}
    D_{KL}(q(x)\|p(x)) &\leq D_{KL}(q(x,z)\|p(x,z)) \\
    &\leq D_{KL}(q(z)\|p(z)) + \mathbb{E}_{q} \underbrace{D_{KL}(p(x|z)\|p(x|z))}_{=0} \\
    &\leq D_{KL}(q(z)\|p(z))
\end{aligned}
\end{equation}
\end{proof}

\section{Comparing two criteria: detailed analysis}
\label{app:grad}

As the intended bias can be expressed using different criteria, the question of comparing these (based on the distribution of their gradient norms) was discussed in section 4.2. Complementary experiments are conducted as follows, along the same setting aimed to conditionalize generative model $p$ using a classifier $p(\ell|x)$. 

A first remark is that the closed form values of $\mathbb{E}_{q_\beta} f$ and $D_{KL}(q_\beta\|p)$ can be estimated using samples from $p$. Specifically, expectations under $q_\beta$ can be reframed as expectations under $p$:

\begin{equation}
    Z(\beta) = \int_{\mathcal{X}} p(x) e^{\beta f(x)} dx = \mathbb{E}_p \left[ e^{\beta f} \right]
\end{equation}

\begin{equation}
    \mathbb{E}_{q_\beta} f = \int_{\mathcal{X}} f(x) \frac{p(x) e^{\beta f(x)}}{Z(\beta)} dx = \frac{\mathbb{E}_p\left[f e^{\beta f}\right]}{\mathbb{E}_p\left[ e^{\beta f} \right]}
    \label{eq:Eq}
\end{equation}

\begin{equation}
    D_{KL}(q_\beta \| p) = \beta \mathbb{E}_{q_\beta} \left[f\right] - \log Z(\beta) = \frac{\mathbb{E}_p\left[f e^{\beta f}\right]}{\mathbb{E}_p\left[ e^{\beta f} \right]} - \log \mathbb{E}_p\left[ e^{\beta f} \right]
    \label{eq:DKL}
\end{equation}

Eqs. \ref{eq:Eq}-\ref{eq:DKL} enable to estimate the closed form values of $\mathbb{E}_{q_\beta} f$ and $D_{KL}(q_\beta\|p)$ {\em vs} $\beta$, using samples drawn after $p$. The comparison of these estimates with the actual $\mathbb{E}_{\hat q_\beta} f$ and $D_{KL}(\hat q_\beta\|p)$
indicates how well \Method\ is dealing with the considered criterion. 

In the considered example, one wants to compare both criteria $f_h$ and $f_{\log h}$, respectively defined as $f_h(x) = p(\ell|x)$ and $f_{\log h}(x) = log p(\ell|x)$. The discrepancy between the theoretical estimate and the actual estimate is displayed on Fig. \ref{fig:th-grad-fh} for $f_h$ (respectively Fig. \ref{fig:th-grad-flogh} for $f_{\log h}$). The same optimization procedure was used in both cases, targeting the class $\ell = 4$.

\begin{figure}
    \centering
    \begin{subfigure}[t]{0.3\textwidth}
        \includegraphics[width=\columnwidth]{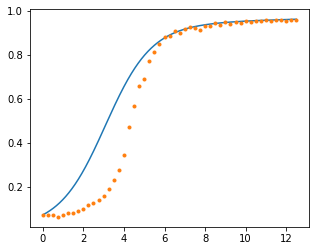}
        \caption{$\mathbb{E}_{q_\beta}\,f_h$ ($y$ axis) vs $\beta$ ($x$ axis).}
        \label{fig:th-grad-fh-beta-f}
    \end{subfigure}
    \hfill
    \begin{subfigure}[t]{0.3\textwidth}
        \includegraphics[width=\columnwidth]{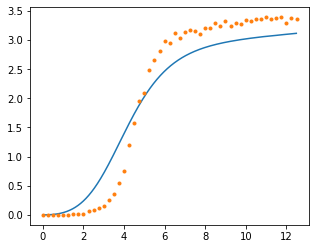}
        \caption{$D_{KL}(q_\beta\|p)$ ($y$ axis) vs $\beta$ ($x$ axis).}
        \label{fig:th-grad-fh-beta-kl}
    \end{subfigure}
    \hfill
    \begin{subfigure}[t]{0.3\textwidth}
        \includegraphics[width=\columnwidth]{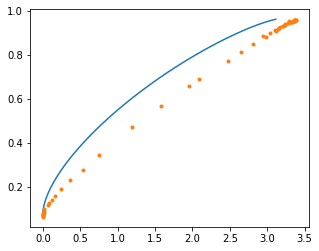}
        \caption{$\mathbb{E}_{q_\beta}\,f_h$ ($y$ axis) vs $D_{KL}(q_\beta\|p)$ ($x$ axis).}
        \label{fig:th-grad-fh-kl-f}
    \end{subfigure}
    \caption{Theoretical (plain line) and experimental (dashed line) estimates of $\mathbb{E}_{q_\beta}\,f_{h}$ and $D_{KL}(q_\beta\|p)$ {\em vs} $\beta$ for $f(x) = h(\ell=4|x)$.}
    \label{fig:th-grad-fh}
\end{figure}

\begin{figure}
    \centering
    \begin{subfigure}[t]{0.3\textwidth}
        \includegraphics[width=\columnwidth]{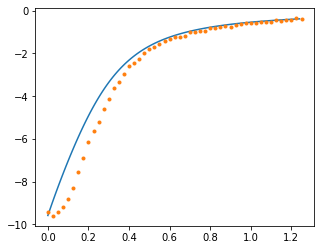}
        \caption{$\mathbb{E}_{q_\beta}\,f_{\log h}$ ($y$ axis) vs $\beta$ ($x$ axis).}
        \label{fig:th-grad-flogh-beta-f}
    \end{subfigure}
    \hfill
    \begin{subfigure}[t]{0.3\textwidth}
        \includegraphics[width=\columnwidth]{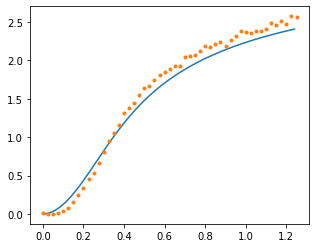}
        \caption{$D_{KL}(q_\beta\|p)$ ($y$ axis) vs $\beta$ ($x$ axis).}
        \label{fig:th-grad-flogh-beta-kl}
    \end{subfigure}
    \hfill
    \begin{subfigure}[t]{0.3\textwidth}
        \includegraphics[width=\columnwidth]{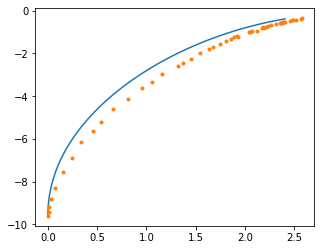}
        \caption{$\mathbb{E}_{q_\beta}\,f_{\log h}$ ($y$ axis) vs $D_{KL}(q_\beta\|p)$ ($x$ axis).}
        \label{fig:th-grad-flogh-kl-f}
    \end{subfigure}
    \caption{Theoretical (plain line) and experimental (dashed line) estimates of $\mathbb{E}_{q_\beta}\,f_{\log h}$ and $D_{KL}(q_\beta\|p)$ {\em vs} $\beta$ for $f(x) = \log h(\ell=4|x)$.}
    \label{fig:th-grad-flogh}
\end{figure}

For small values of $\beta$, with criterion $f_h$,  Fig. \ref{fig:th-grad-fh} shows that the empirical $\mathbb{E}_{q_\beta}\,f_h$ does not much increase, while $q_\beta$ remains close to $p$ ($D_{KL}(q_\beta\|p)$ stays close to 0). In other words, the bias seems ineffective. Quite the contrary, for large values of $\beta$, the empirical $D_{KL}(q_\beta\|p)$ increases significantly faster than the theoretical estimate;  \Method\ overshoots and focuses too much the support of distribution $q_\beta$. In comparison, a much smaller gap between the theoretical and empirical estimates is observed with criterion $f_{log.h}$ (Fig. \ref{fig:th-grad-flogh}).

These observations are in agreement with the analysis proposed in section 4.2: $f_h$ only provides useful gradients in the boundary of the targeted class. Accordingly, the process finds itself in one out of two stable states: doing nothing ($q_\beta = p$); or restricting the support of $q_\beta$ to that of the targeted class. \Method\ abruptly switches from the first to the second stable state (Fig. \ref{fig:th-grad-fh}.b), offering little control through $\beta$. 
When setting $f(x) = \log p(\ell|x))$ instead, $f(x)$ is less and less often saturated, enabling its gradient to provide smooth information. This information enables the user to finely control the bias through $\beta$, making the support of $q_\beta$ to gracefully tend toward the support of the targeted class.

\section{Generality of the approach: a proof of concept on MNIST}

\label{app:bold-mnist}

The claim is that \Method\ can be applied using any differentiable criterion (with exploitable gradient, see Appendix C. above). Three criteria are illustrated on Figs. \ref{fig:mnist-cls}, \ref{fig:mnist-pixels} and \ref{fig:mnist-neg-pixels}, respectively biasing the generative process toward a certain class, figures with more white pixels, or less white pixels.

The fine-grained control of the bias is illustrated on Fig. \ref{fig:mnist-cls} on MNIST, with target class $\ell = 4$, using a GAN model $p$. The Pareto front depicting the bi-criteria optimization trade-off ($\mathbb{E}_{q_\beta}f$ vs $D_{KL}(q_\beta \|\ p)$ for $\beta$ ranging from 0 to 2.5) is displayed on Fig. \ref{fig:mnist-cls-kl-f}, and the biased generated samples, where each row from top to bottom displays the samples generated with increasing values of $\beta$, are displayed on  Fig. \ref{fig:mnist-cls-images}.  
Indeed, class 4 is more prevalent as $\beta$ increases; class 9 is the last one to disappear, as being the most similar to the 4 one; for the highest values of $\beta$, only digits in class 4 are generated, yielding the same result as a conditional generative model, as expected.

A similar interpretation can be made for the two other examples on Figs. \ref{fig:mnist-pixels} and \ref{fig:mnist-neg-pixels}.

\begin{figure}[h]
    \centering
    \begin{subfigure}[t]{0.4\textwidth}
        \includegraphics[width=\columnwidth]{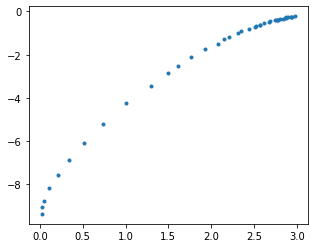}
        \caption{$\mathbb{E}_{q_\beta}\,f$ ($y$ axis) vs $D_{KL}(q_\beta\|p)$ ($x$ axis).}
        \label{fig:mnist-cls-kl-f}
    \end{subfigure}
    \hspace{1cm}
    \begin{subfigure}[t]{0.4\textwidth}
        \includegraphics[width=\columnwidth]{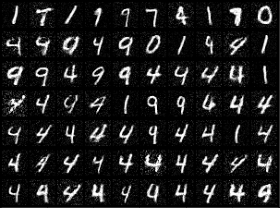}
        \caption{Generated samples, with the strength $\beta$ of the bias increasing from top to bottom rows.}
        \label{fig:mnist-cls-images}
    \end{subfigure}
    \caption{Using \Method\ to condition a generative model in the latent space, with $p$ a GAN trained on MNIST and $f = \log {\hat h(\text{class } 4\,|\,z)}$, and $h$ an independently trained classifier on the instance space. Left: Pareto front of both \objectives. Right: generated samples, with top to bottom rows respectively corresponding to $\beta$ in $\{0.0, 0.25, 0.5, 0.75, 1.0, 1.25\}$, and corresponding $D_{KL}$ values $0.0, 0.4, 1.3, 2.2, 2.5, 2.7$.}
    \label{fig:mnist-cls}
\end{figure}

\begin{figure}
    \centering
    \begin{subfigure}[t]{0.4\textwidth}
        \includegraphics[width=\columnwidth]{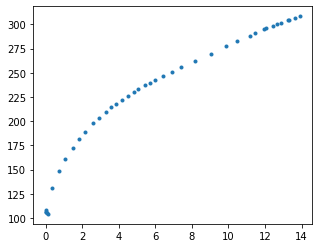}
        \caption{$\mathbb{E}_{q_\beta}\,f$ ($y$ axis) vs $D_{KL}(q_\beta\|p)$ ($x$ axis).}
        \label{fig:mnist-pixels-kl-f}
    \end{subfigure}
    \hspace{1cm}
    \begin{subfigure}[t]{0.4\textwidth}
        \includegraphics[width=\columnwidth]{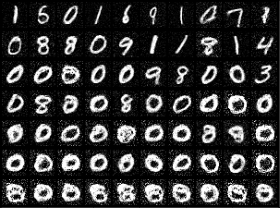}
        \caption{Generated samples, with the strength $\beta$ of the bias increasing from top to bottom rows.}
        \label{fig:mnist-pixels-images}
    \end{subfigure}
    \caption{Using \Method\ to condition a generative model in the latent space, with $p$ a GAN trained on MNIST and $f(x) = \sum_{i \in pixels} x_i$. Left: Pareto front of both \objectives. Right: generated samples, with top to bottom rows respectively corresponding to $\beta$ in $\{0.0, 0.025, 0.05, 0.075, 0.1, 0.125, 0.150\}$, and corresponding $D_{KL}$ values $0.0, 0.3, 2.1, 3.9, 5.4, 7.4, 8.7$.}
    \label{fig:mnist-pixels}
\end{figure}

\begin{figure}
    \centering
    \begin{subfigure}[t]{0.4\textwidth}
        \includegraphics[width=\columnwidth]{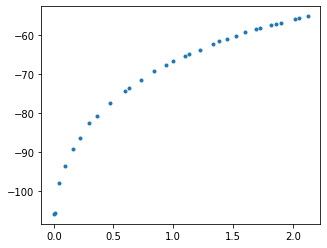}
        \caption{$\mathbb{E}_{q_\beta}\,f$ ($y$ axis) vs $D_{KL}(q_\beta\|p)$ ($x$ axis).}
        \label{fig:mnist-neg-pixels-kl-f}
    \end{subfigure}
    \hspace{1cm}
    \begin{subfigure}[t]{0.4\textwidth}
        \includegraphics[width=\columnwidth]{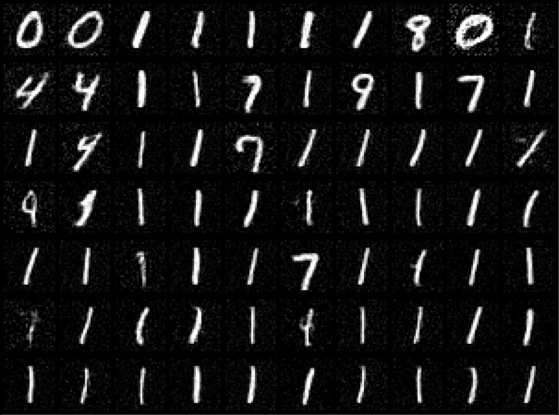}
        \caption{Generated samples, with the strength $\beta$ of the bias increasing from top to bottom rows.}
        \label{fig:mnist-neg-pixels-images}
    \end{subfigure}
    \caption{Using \Method\ to condition a generative model in the latent space, with $p$ a GAN trained on MNIST and $f(x) = \sum_{i \in pixels} x_i$. Left: Pareto front of both \objectives. Right: generated samples, with top to bottom rows respectively corresponding to $\beta$ in $\{0.0, 0.025, 0.05, 0.075, 0.1, 0.125, 0.150\}$, and corresponding $D_{KL}$ values $0.0, 0.2, 0.6, 1.1, 1.4, 1.8, 2.1$.}
    \label{fig:mnist-neg-pixels}
\end{figure}

As seen on Fig. \ref{fig:mnist-pixels}, biasing the generative model toward figures with more white pixels is achieved through controlling both the class of the generated figures (class $0$ and $8$) and the style of the generated numbers (with thick strokes). Quite the contrary, biasing the generative model toward figures with less white pixels results in generating very thin 1s.


\section{Rejection sampling with \Method\ in a real-world application}

\label{app:electrical}

As said, the application domain concerns smart grid management and dimensioning. The latter requires key indicators (consumption peak) to be estimated from consumption curves generated under diverse scenarii.
A versatile generative model is trained with a VAE, exploiting real weekly consumption curves aggregated over 10 households (thus with a higher variance compared to the curves aggregated over 100 households, considered in the main paper). 

The flexibility of the approach is demonstrated using several criteria. 

The first criterion aims to maximize the consumption over a particular day (here Wednesday, Fig. \ref{fig:elect-max-wed}). The goal is achieved by maximizing the weekly consumption, with the consumption on Wednesday being only slightly higher than the average one. 
The second criterion aims to
maximize the difference between the Wednesday consumption and the average weekly consumption (intuitively, this criterion corresponds to a worst-case analysis scenario). Using this criterion and allowing the $D_{KL}$ to take large values (corresponding to a rejection sampling with probability $10^{-4}$) yields the curves illustrated on  Fig. \ref{fig:elect-max-wed-avg}. Despite the strength of the bias, \Method\ still manages to generate diverse samples; furthermore, the sample variance is comparable to that of the original data.

\begin{figure}
    \centering
    \begin{subfigure}[t]{0.49\textwidth}
        \includegraphics[width=\columnwidth]{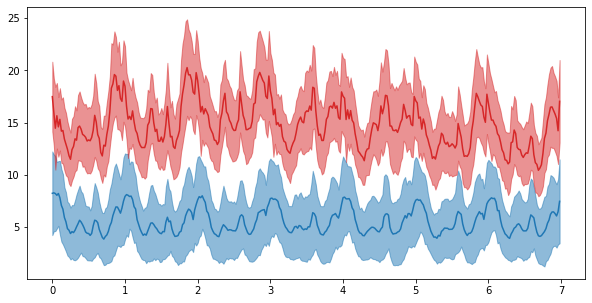}
        \caption{Maximizing the consumption of Wednesday.}
        \label{fig:elect-max-wed}
    \end{subfigure}
    \begin{subfigure}[t]{0.49\textwidth}
        \includegraphics[width=\columnwidth]{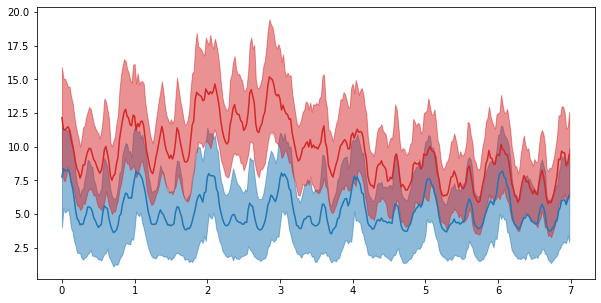}
        \caption{Maximizing the difference between the consumption of Wednesday and the average weekly consumption.}
        \label{fig:elect-max-wed-avg}
    \end{subfigure}
    \caption{Application of \Method\ in the context of energy management: generating consumption curves biased according to: Average Wednesday consumption (\ref{fig:elect-max-wed}); Average Wednesday consumption and difference between Wednesday consumption and average weekly consumption (\ref{fig:elect-max-wed-avg}). Blue curves represent the mean and standard deviation of samples from the original model, and red curves that of samples from the biased model (best seen in color).}
    \label{fig:elect-max-day}
\end{figure}

The third criterion is related to the variability of the demand, with a high impact on the required flexibility of electricity production. A relevant indicator, referred to as
MAE by abuse of the definition,
is the  amount of consumption that would need to be moved in order to make the consumption constant along time (with same overall consumption), i.e. the L1 distance between the actual consumption curve and the flat curve with same overall consumption. Fig. \ref{fig:elect-l1} displays average generated consumption curves when  applying \Method\ to maximize or minimize the MAE.

\begin{figure}
    \centering
    \begin{subfigure}[t]{0.49\textwidth}
        \includegraphics[width=\columnwidth]{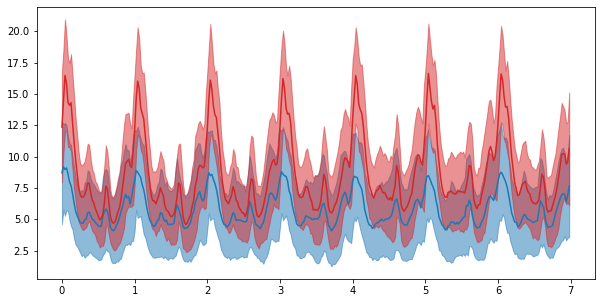}
        \caption{Maximizing the L1 distance to mean consumption.}
        \label{fig:elect-max-l1}
    \end{subfigure}
    \begin{subfigure}[t]{0.49\textwidth}
        \includegraphics[width=\columnwidth]{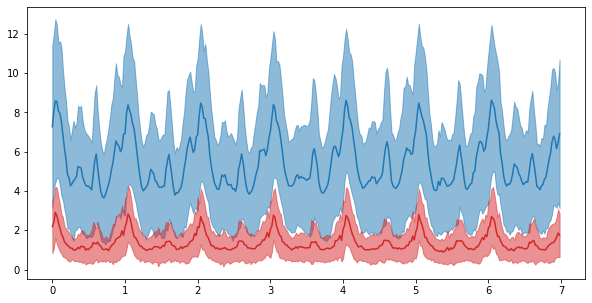}
        \caption{Minimizing the L1 distance to mean consumption.}
        \label{fig:elect-min-l1}
    \end{subfigure}
    \caption{Application of \Method\ in the context of energy management: generating consumption curves biased to maximize (\ref{fig:elect-max-l1}) and minimize (\ref{fig:elect-min-l1}) the L1 distance between the consumption and its average. Blue curves represent the mean and standard deviation of samples from the original model, and red curves that of samples from the biased model (best seen in color).}
    \label{fig:elect-l1}
\end{figure}

The curves obtained when minimizing the MAE (Fig. \ref{fig:elect-min-l1}) can be interpreted intuitively as: a good way to get a flat consumption curve is when the house is empty (e.g. during holidays), since inhabited houses typically present strong cyclical patterns across the day. 

When maximizing the MAE (Fig. \ref{fig:elect-max-l1}), the interpretation of the obtained curves is equally straightforward: \Method\ takes advantage of the natural variability of the data  to significantly increase the height of the consumption peaks, while only slightly increasing the average consumption, thereby yielding a high variance of the daily consumption.

\section{Adversarially refining a generative model using \Method: Discussion}

\label{app:adversarial}

A possible usage of \Method\ is to focus an overly general generative model (with support covering the data support) along an adversarial scheme, using a discriminator trained to distinguish between the actual and the generated samples as criterion $f$. 

Experiments are conducted to examine the feasibility of this 2-step generative modelling approach, with $p$ a VAE trained on MNIST and $g$ a classifier trained to discriminate the actual and the generated data (with accuracy $0.99$),
using its pre-activation output as $f$. \Method\ is applied on the VAE's latent space, and results are displayed on Fig. \ref{fig:mnist-disc}.

\begin{figure}
    \centering
    \begin{subfigure}[t]{0.4\textwidth}
        \includegraphics[width=\columnwidth]{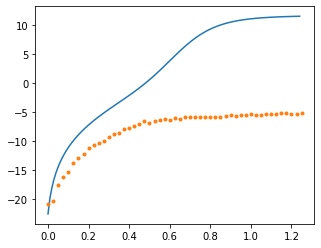}
        \caption{$\mathbb{E}_{q_\beta}\,f$ ($y$ axis) vs $\beta$ ($x$ axis).}
        \label{fig:mnist-disc-beta-f}
    \end{subfigure}
    \hspace{1cm}
    \begin{subfigure}[t]{0.4\textwidth}
        \includegraphics[width=\columnwidth]{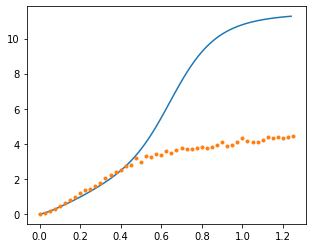}
        \caption{$D_{KL}(q_\beta\|p)$ ($y$ axis) vs $\beta$ ($x$ axis).}
        \label{fig:mnist-disc-beta-kl}
    \end{subfigure}

    \begin{subfigure}[t]{0.4\textwidth}
        \includegraphics[width=\columnwidth]{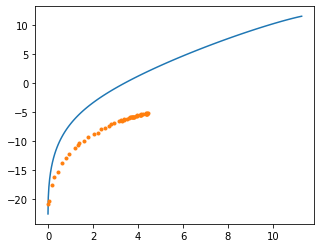}
        \caption{$\mathbb{E}_{q_\beta}\,f$ ($y$ axis) vs $D_{KL}(q_\beta\|p)$ ($x$ axis).}
        \label{fig:mnist-disc-kl-f}
    \end{subfigure}
    \hspace{1cm}
    \begin{subfigure}[t]{0.4\textwidth}
        \includegraphics[width=\columnwidth]{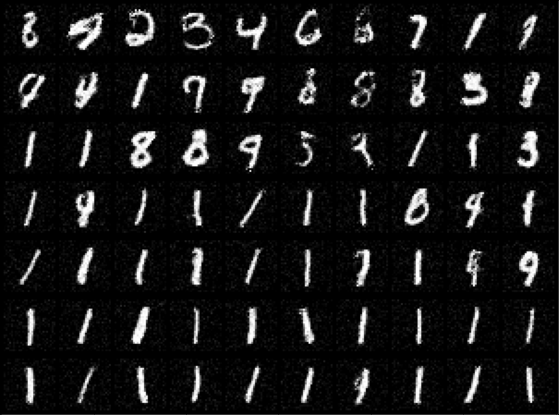}
        \caption{Generated samples, with the strength $\beta$ of the bias increasing from top to bottom rows.}
        \label{fig:mnist-disc-images}
    \end{subfigure}
    \caption{\Method: adversarial refinement of $p$ (VAE trained on MNIST) along criterion $f$, with $f$ a discriminator. As in Appendix \ref{app:grad},  Blue line are the theoretical curves, and orange dots are the empirical values. \ref{fig:mnist-disc-beta-f}, \ref{fig:mnist-disc-beta-kl}: Evolution of $\mathbb{E}_{q_\beta}\,f$ and $D_{KL}(q_\beta\|p)$ with $\beta$ . \ref{fig:mnist-disc-kl-f}: Pareto front of both \objectives. \ref{fig:mnist-disc-images}: generated samples with a clear mode dropping phenomenon, with top to bottom rows respectively corresponding to $\beta$ in $\{0.0, 0.125, 0.250, 0.375, 0.5, 0.625, 0.750\}$  and corresponding $D_{KL}$ values $0.0, 0.6, 1.4, 2.4, 2.9, 3.6, 3.7$. 
}
    \label{fig:mnist-disc}
\end{figure}

With same methodology as in Appendix \ref{app:grad}, the optimization process is assessed by comparing the theoretical and the empirical estimates of $\mathbb{E}_{q_\beta}\,f$ and $D_{KL}(q_\beta\|p)$.

The optimization fails: for $\beta \geq 0.5$, the $D_{KL}$ stagnates, that is, \Method\ cannot push $q_\beta$ farther away from $p$. For $\beta < 0.5$, \Method\ does not manage to increase $\mathbb{E}_{q_\beta}\,f$ as expected from the theoretical estimate. 

This change of behavior around $\beta = .5$ 
is analyzed in relation with the distribution of $f$ gradients wrt to $p$ (Fig.  \ref{fig:mnist-disc-grad}), involving most gradient norms in a reasonable range ($[0;10]$), while some gradients do explode with a norm as large as 230. This suggests that the optimization landscape includes large smooth regions with some very sharp regions (cliffs).

It is noted that at the change point ($\beta \approx .5$), $D_{KL} \approx 4$, that is, $q_\beta$ is focused on approximately 2\% of the support of $p$. Our interpretation is that, at this point the process meets the high gradient norm region and remains stuck. 

The fact that \Method\ cannot thus refine $p$ using the adversarial criterion is eventually blamed on two factors. Firstly, the discriminator seems sufficiently powerful to characterize the support of the true data as a set of isolated regions separated by high cliffs. Secondly, the generative model search space (based on Normalizing Flows; specifically, 6 Inverse AutoRegressive flows layers, each consisting of 4 fully-connected layers) seems not flexible enough to comply with the discriminator, and to approximate a mixture. Eventually,  \Method\ is unable to modify the structure of $p$ as desired in the small $\beta$ region (with $\mathbb{E}_{q_\beta}\,f$ about twice smaller than the theoretical estimate); and totally unable to modify it for $\beta > .5)$. 
How to remedy both limitations is left for future work.

\begin{figure}
    \centering
    \includegraphics[width=\columnwidth]{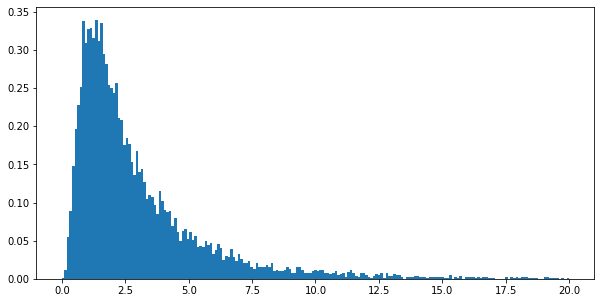}
    \caption{Distribution of the norm of the gradient of the objective $f$ (pre-activation output of the discriminator) wrt to the latent variable. The histogram is truncated at a norm of $20$ for legibility, but around $1\%$ of the gradients have a higher norm, going up to $230$. }
    \label{fig:mnist-disc-grad}
\end{figure}

\end{document}